\documentclass[accepted]{uai2023} 

\usepackage[american]{babel}

\usepackage{natbib} 
    \bibliographystyle{plainnat}
    
\usepackage{mathtools} 
\usepackage{booktabs} 
\usepackage{tikz} 



\usepackage{hyperref}
\hypersetup{colorlinks=true,citecolor=blue,linkcolor=blue}




\usepackage{amssymb}
\usepackage{amsmath,mathrsfs,dsfont}
\usepackage{nicefrac}
\usepackage{algorithm}
\usepackage{algorithmicx}
\usepackage{algpseudocode}

\usepackage{booktabs}

\usepackage{color}
\usepackage{enumitem}

\usepackage{array}
\usepackage{graphicx,tikz}
\usepackage[mathscr]{euscript}
\usepackage{amsthm}
\PassOptionsToPackage{square,sort,comma,numbers}{natbib}
\usepackage{multirow}

\usepackage{bm}
\usepackage{bbm}
\usepackage{color}

\usepackage{color}
\usepackage{epstopdf}
\usepackage{thmtools}
\usepackage{thm-restate}
\usepackage{subfigure}
\usepackage{bbding}


\theoremstyle{plain}
\newtheorem{theorem}{Theorem}[section]
\newtheorem{proposition}[theorem]{Proposition}
\newtheorem{lemma}[theorem]{Lemma}

\theoremstyle{definition}
\newtheorem{definition}[theorem]{Definition}

\theoremstyle{remark}

\newcommand\supp{{\rm supp}}
\newcommand{\cA}{\mathcal{A}}

\newcommand{\cS}{\mathcal{S}}

\newcommand{\cC}{\mathcal{C}}

\newcommand{\cO}{\mathcal{O}}

\newcommand{\bq}{\mathbf{q}}

\newcommand{\bu}{\mathbf{u}}
\newcommand{\bv}{\mathbf{v}}

\newcommand{\bx}{\mathbf{x}}
\newcommand{\by}{\mathbf{y}}
\newcommand{\bz}{\mathbf{z}}

\newcommand{\bA}{\mathbf{A}}

\newcommand{\bD}{\mathbf{D}}

\newcommand{\bR}{\mathbf{R}}
\newcommand{\bS}{\mathbf{S}}

\newcommand{\bU}{\mathbf{U}}
\newcommand{\bp}{{\mathbf{p}}}

\newcommand{\bX}{\mathbf{X}}

\newcommand{\bzero}{\mathbf 0}

\newcommand{\bI}{{\mathbf{I}}}
\newcommand{\bL}{{\mathbf{L}}}
\newcommand{\bW}{{\mathbf{W}}}

\newcommand{\bZ}{{\mathbf{Z}}}

\newcommand{\N}{{\rm I}\kern-0.18em{\rm N}}

\newcommand{\bbP}{\mathbb{P}}
\newcommand{\h}{{\rm I}\kern-0.18em{\rm H}}
\newcommand{\K}{{\rm I}\kern-0.18em{\rm K}}
\newcommand{\p}{{\rm I}\kern-0.18em{\rm P}}
\newcommand{\E}{{\rm I}\kern-0.18em{\rm E}}
\newcommand{\Z}{{\rm Z}\kern-0.18em{\rm Z}}

\newcommand{\1}{{\rm 1}\kern-0.25em{\rm I}}

\newcommand{\pn}{\p_{\kern-0.25em n}}
\newcommand{\pnm}{\p_{\kern-0.25em n,m}}
\newcommand{\psubm}{\p_{\kern-0.25em m}}

\def\RR{\mathbb{R}}
\def\defeq {\coloneqq}
\newcommand{\circled}[1]{\small{\raisebox{.6pt}{\textcircled{\raisebox{-.8pt}{#1}}}}}
\newcommand{\stcomp}[1]{\overline{#1}}
\def\prox{\textup {prox}}
\def \supp#1{\textup{supp}(#1)}
\def\set#1{\left\{ #1 \right\}}
\def\pth#1{\left( #1 \right)}

\def\abth#1{\left | #1 \right |}
\def \eps  {\epsilon}
\newcommand{\bfm}[1]{\ensuremath{\mathbf{#1}}}
\def\bm{\bfm m}

\newcommand{\beq}{\begin{equation}}
\newcommand{\eeq}{\end{equation}}
\newcommand{\beqa}{\begin{eqnarray}}
\newcommand{\eeqa}{\end{eqnarray}}
\newcommand{\beqas}{\begin{eqnarray*}}
\newcommand{\eeqas}{\end{eqnarray*}}
\def\bal#1\eal{\begin{align}#1\end{align}}
\def\bals#1\eals{\begin{align*}#1\end{align*}}
\def\bsal#1\esal{\begin{small}\begin{align}#1\end{align}\end{small}}
\def\bsals#1\esals{\begin{small}\begin{align*}#1\end{align*}\end{small}}
\def\bsfal#1\esfal{\begin{small}\begin{flalign}#1\end{flalign}\end{small}}

\newcommand{\BigO}[1]{{\operatorname{O}}}

\DeclareMathOperator*{\argmin}{arg\,min}




\def\norm#1#2{{\left\|#1\right\|}_{#2}}

\def\lonenorm#1{\norm{#1}{1}}
\def\ltwonorm#1{\norm{#1}{2}}
\def\fnorm#1{\norm{#1}{\textup{F}}}

\title{Locally Regularized Sparse Graph by Fast Proximal Gradient Descent}

%
%
\author[]{Dongfang Sun, Yingzhen Yang}
\affil[]{%
School of Computing and Augmented Intelligence\\
Arizona State University, Tempe, AZ 85281, USA \\
\texttt{\{dsun30,yingzhen.yang\}@asu.edu}
}

  \begin{document}
\maketitle

\begin{abstract}
Sparse graphs built by sparse representation has been demonstrated to be effective in clustering high-dimensional data. Albeit the compelling empirical performance, the vanilla sparse graph ignores the geometric information of the data by performing sparse representation for each datum separately. In order to obtain a sparse graph aligned with the local geometric structure of data, we propose a novel Support Regularized Sparse Graph, abbreviated as SRSG, for data clustering. SRSG encourages local smoothness on the neighborhoods of nearby data points by a well-defined support regularization term. We propose a fast proximal gradient descent method to solve the non-convex optimization problem of SRSG with the convergence matching the Nesterov's optimal convergence rate of first-order methods on smooth and convex objective function with Lipschitz continuous gradient. Extensive experimental results on various real data sets demonstrate the superiority of SRSG over other competing clustering methods.
\end{abstract}

\section{Introduction}
Clustering methods based on pairwise similarity, such as K-means~\citep{Duda2000}, Spectral Clustering \citep{Ng01} and Affinity Propagation \citep{Frey07clusteringby}, segment the data in accordance with certain similarity measure between data points. The performance of similarity-based clustering largely depends on the similarity measure. Among various similarity-based clustering methods, graph-based methods~\citep{Schaeffer2007} are promising which often model data points and data similarity as vertices and edge weight of the graph respectively. Sparse graphs, where only a few edges of nonzero weights are associated with each vertex, are effective in clustering high-dimensional data. Existing works, such as Sparse Subspace Clustering (SSC) \citep{ElhamifarV13} and $\ell^{1}$-graph \citep{YanW09,ChengYYFH10}, build sparse graphs by sparse representation of each point. In these sparse graphs, the vertices represent the data points, an edge is between two vertices whenever one contributes to the spare representation of the other, and the weight of the edge is determined by the associated sparse codes. A theoretical explanation is provided by SSC, which shows that such sparse representation recovers the underlying subspaces from which the data are drawn under certain assumptions on the data distribution and angles between subspaces. When such assumptions hold, data belonging to different subspaces are disconnected in the sparse graph. A sparse similarity matrix is then obtained as the weighted adjacency matrix of the constructed sparse graph by $\ell^{1}$-graph or SSC, and spectral clustering is performed on the sparse similarity matrix to obtain the data clusters. In the sequel, we refer to $\ell^{1}$-graph and SSC as vanilla sparse graph. Vanilla sparse graph has been shown to be robust to noise and capable of producing superior results for high-dimensional data, compared to spectral clustering on the similarity produced by the widely used Gaussian kernel.
Albeit compelling performance for clustering, vanilla sparse graph is built by performing sparse representation for each data point independently without considering the geometric information of the data. High dimensional data always lie in low dimensional submanifold. The Manifold assumption \citep{BelkinNS06} has been employed in the sparse graph literature with an effort in learning a sparse graph aligned with the local geometric structure of the data. For example, Laplacian Regularized $\ell^{1}$-graph (LR-$\ell^{1}$-graph) is proposed in \citep{Yang2014-rl1graph,YangWYWCH14-laplacian-l1graph} to obtain locally smooth sparse codes in the sense of $\ell^{2}$-distance so as to improve the performance of vanilla sparse graph. The locally smooth sparse codes in LR-$\ell^{1}$-graph is achieved by penalizing large $\ell^{2}$-distance between the sparse codes of two nearby data points. However, the locally smooth sparse codes in the sense of $\ell^{2}$-distance does not encode the local geometric structure of the data into the construction of sparse graph. Intuitively, it is expected that nearby data points, or vertices, in a data manifold could exhibit locally smooth neighborhood according to the geometric structure of the data. Namely, nearby points are expected to have similar neighbors in the constructed sparse graph.

The support of the sparse code of a data point determines the neighbors it selects, and the nonzero elements of the sparse code contribute to the corresponding edge weights. This indicates that $\ell^{2}$-distance is not a suitable distance measure for sparse codes in our setting, and one can easily imagine that two sparse codes can have very small $\ell^{2}$-distance while their supports are quite different, meaning that they choose different neighbors. Motivated by the manifold assumption on the local sparse graph structure, we propose a novel Support Regularized Sparse Graph, abbreviated as SRSG, which encourages smooth local neighborhood in the sparse graph. The smooth local neighborhood is achieved by a well-defined support distance between sparse codes of nearby points in a data manifold.

The contributions of this paper are as follows. First, we propose Support Regularized Sparse Graph (SRSG) which is capable of learning a sparse graph with its local neighborhood structure aligned to the local geometric structure of the data manifold. Secondly, we propose an efficient and provable optimization algorithm, Fast Proximal Gradient Descent with Support Projection (FPGD-SP), to solve the non-convex optimization problem of SRSG. Albeit the nonsmoothness and nonconvexity of the optimization problem of SRSG, FPGD-SP still achieves Nesterov's optimal convergence rate of first-order methods on smooth and convex problems with Lipschitz continuous gradient.

The remaining parts of the paper are organized as follows. Vanilla sparse graph ($\ell^{1}$-graph) and LR-$\ell^{1}$-graph are introduced in the next section, and then the detailed formulation of SRSG is illustrated. We then show the clustering performance of SRSG, and conclude the paper. Throughout this paper, bold letters are used to denote matrices and vectors, and regular lower letter is used to denote scalars. Bold letter with superscript indicates the corresponding column of a matrix, e.g. $\bZ^i$ indicates the $i$-th column of the matrix $\bZ$, and the bold letter with subscript indicates the corresponding element of a matrix or vector. $\sigma_{\max}(\cdot)$ and $\sigma_{\min}(\cdot)$ indicate the maximum and minimum singular value of a matrix. $\fnorm{\cdot}$ and $\norm{\cdot}{p}$ denote the Frobenius norm and the $\ell^{p}$-norm, and ${\rm diag}(\cdot)$ indicates the diagonal elements of a matrix. $\supp{\bv}$ denotes the support of a vector $\bv$, which is the set of indices of nonzero elements of $\bv$. $[n]$ denotes all the natural numbers between $1$ and $n$ inclusively.

\section{Preliminaries: Vanilla Sparse Graph and Its Laplacian Regularization}
\label{sec::preliminary}
Vanilla sparse graph based methods \citep{YanW09,ChengYYFH10,ElhamifarV09,ElhamifarV13,WangX13,soltanolkotabi2014} apply the idea of sparse coding to build sparse graphs for data clustering. Given the data ${\bX}=[ {{\bx_1},\ldots ,{\bx_n}} ] \in {\RR^{d \times n}}$, robust version of vanilla sparse graph first solves the following optimization problem for some weighting parameter $\lambda_{\ell^{1}} > 0$ to obtain the sparse representation for each data point $\bx_i$:

\bal\label{eq:l1graph}
\mathop {\min }\limits_{{\bZ^{i} \in \RR^{n}, \bZ_i^{i} = 0 }} \ltwonorm{{\bx_i} - {\bX}\bZ^{i}}^2 + \lambda_{\ell^{1}} \lonenorm{\bZ^{i}}, \quad i \in [n],
\eal%
then construct a coefficient matrix $\bZ = [\bZ^1,\ldots,\bZ^n] \in \RR^{n \times n}$ with element $\bZ_{ij} = \bZ_i^j$, where $\bZ^{i}$ is the $i$-th column of $\bZ$. The diagonal elements of $\bZ$ are zero so as to avoid trivial solution $\bZ = \bI_n$ where $\bI_n$ is a $n \times n$ identity matrix.

A vanilla sparse graph $G = ( {{\bX},{\bW}} )$ is then built where ${\bX}$ is the set of vertices, $\bW$ is the weighted adjacency matrix of $G$. The edge weight $\bW_{ij}$ is set by the sparse codes according to
\bal\label{eq:W}
{\bW_{ij}}=({|{\bZ_{ij}}|+|{\bZ_{ji}}|})/{2}, \quad 1 \le i,j \le n.
\eal%

Finally, the data clusters are obtained by performing spectral clustering on the vanilla sparse graph $G$ with sparse similarity matrix $\bW$. In SSC and its geometric analysis \citep{ElhamifarV09,ElhamifarV13,soltanolkotabi2014}, it is proved that the sparse representation (\ref{eq:l1graph}) for each datum recovers the underlying subspaces from which the data are generated when the subspaces satisfy certain geometric properties in terms of the principle angle between different subspaces. When these required assumptions hold, data belonging to different subspaces are disconnected in the sparse graph, leading to the success of the subspace clustering. In practice, however, one can often empirically try the same formulation to obtain satisfactory results even without checking the assumptions.

High dimensional data often lie on or close to a submanifold of low intrinsic dimension, and existing clustering methods benefit from learning data representation aligned to its underlying manifold structure. While vanilla sparse graph demonstrates better performance than many traditional similarity-based clustering methods, it performs sparse representation for each datum independently without considering the geometric information and manifold structure of the entire data. On the other hand, in order to obtain the data embedding that accounts for the geometric information and manifold structure of the data, the manifold assumption \citep{BelkinNS06} is usually employed \citep{LiuCH10,HeX11,Zheng11,GaoTC13}.
\section{Support Regularized Sparse Graph}
In this section, we propose Support Regularized Sparse Graph (SRSG) which learns locally smooth neighborhoods in the sparse graph by virtue of locally consistent support of the sparse codes. Instead of imposing smoothness in the sense of $\ell^{2}$-distance on the sparse codes in the existing LR-$\ell^{1}$-graph, SRSG encourages locally smooth neighborhoods so as to capture the local manifold structure of the data in the construction of the sparse graph. A side effect of locally smooth neighborhoods is robustness to noise or outliers by encouraging nearby points on the manifold to choose similar neighbors in the sparse graph. Note that $\ell^{2}$-distance based graph regularization cannot enjoy this benefit since small $\ell^{2}$-distance between the sparse codes of nearby data points does not guarantee their consistent neighborhood in the sparse graph. The optimization problem of SRSG is
\bal\label{eq:srsg}
&\mathop {\min }\limits_{\bZ \in \RR^{n \times n}, {\rm diag}(\bZ) = \bzero} L(\bZ) =  \sum\limits_{i = 1}^n \ltwonorm{{\bx_i} - {\bX}\bZ^{i}}^2 + \gamma  \bR_{\bS}(\bZ),
\eal%
where $\bR_{\bS}(\bZ) \defeq \sum\limits_{i,j=1}^n {\bS_{ij} d(\bZ^i, \bZ^j)  } $ is the support regularization term, $\bS$ is the adjacency matrix of the KNN graph, $\gamma>0$ is the weighting parameter for support regularization term. $\bS_{ij} = 1$ if $\bx_j$ is among the $K$ nearest neighbors of $\bx_i$ in terms of the regular Euclidean distance, and $0$ otherwise. Each data point $\bx_i$ is normalized to have unit $\ell^{2}$-norm. $d(\bZ^i, \bZ^j)$ is the support distance between two sparse codes $\bZ^i$ and $\bZ^j$ which is defined as
\bal\label{eq:support-distance}
&d(\bZ^i, \bZ^j) = \sum\limits_{ 1 \le m \le n, m \neq i,j} (\1_{\bZ_m^{i} = 0,\bZ_m^{j} \neq 0} + \1_{\bZ_m^{i} \neq 0, \bZ_m^{j} = 0}).
\eal
SRSG encourages nearby data points to have similar neighborhoods by penalizing large support distance between every pair of nearby points in a data manifold. It can be seen from (\ref{eq:support-distance}) that a small support distance between $\bZ^i$ and $\bZ^j$ indicates that the indices of their nonzero elements are mostly the same. By the construction of sparse graph (\ref{eq:W}), it indicates that the two points $\bx_i$ and $\bx_j$ choose similar neighbors. Figure~\ref{fig:sparse-graph} further illustrates the effect of support regularization.
\begin{figure}[!ht]
\centering
\includegraphics[width=0.23\textwidth]{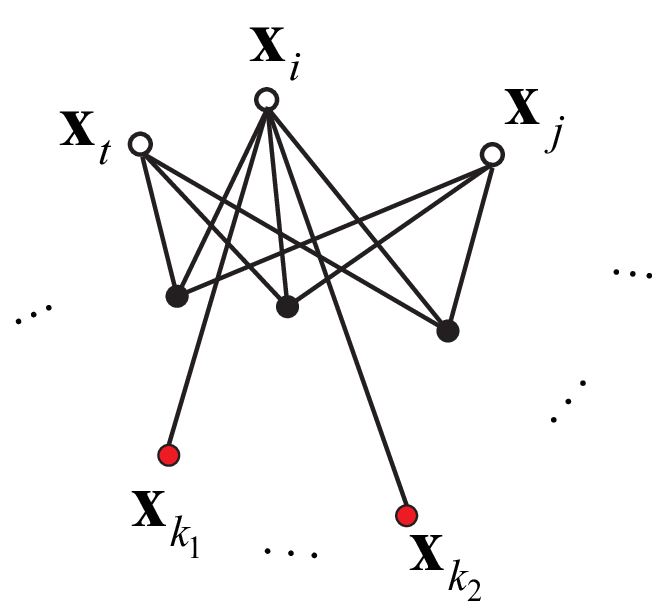}
\caption{During the construction of support regularized sparse graph, point $\bx_i$ is among the $K$ nearest neighbors of $\bx_t$ and $\bx_j$. $\bZ^t$ and $\bZ^j$ have the same support denoted by the three black dots ($\bx_{k_1}$, $\bx_{k_2}$ and $\bx_{k_3}$), suggesting the correct neighbors of $\bx_i$. By penalizing support distance between nearby points, $\bx_i$ is encouraged to choose the three black dots as neighbors in the sparse graph while discarding the wrong neighbors marked in red. }
\label{fig:sparse-graph}
\end{figure}

We use coordinate descent to optimize (\ref{eq:srsg}) with respect to $\bZ^i$, i.e. in each step the $i$-th column of $\bZ$, while fixing all the other sparse codes $\{\bZ^j\}_{j \neq i}$. In each step of coordinate descent, the optimization problem for $\bZ^i$ is
\bal\label{eq:srsg-cdi}
&\mathop {\min }\limits_{\bZ^{i} \in \RR^{n}, \bZ_i^i=0} F(\bZ^i) = \ltwonorm{{\bx_i} - {\bX}\bZ^{i}}^2 + \gamma  \bR_{\bS}(\bZ^i),
\eal%
where $\bR_{\bS}(\bZ^i) \defeq \sum\limits_{j=1}^n \bS_{ij} d(\bZ^i,\bZ^j)$. Note that $\bR_{\bS}(\bZ^i)$ can be written as
$\bR_{\bS}(\bZ^i) = \sum\limits_{t=1}^n c_{ti} \1_{\bZ_t^i \neq 0}$ up to a constant, where $c_{ti} = \sum\limits_{j=1}^n \bS_{ij} \1_{\bZ_t^j = 0} - \sum\limits_{j=1}^n  \bS_{ij} \1_{\bZ_t^j \neq 0}$.

It should be emphasized that SRSG does not use the $\ell^{1}$-norm, which is $\lonenorm{\bZ^{i}}$, to impose sparsity on $\bZ^{i}$. It is noted that the SRSG regularization term  $\bR_{\bS}(\bZ)$ induces a sparse graph where every column of $\bZ$ is sparse. This is because $\bR_{\bS}(\bZ)$ penalizes the number of different neighbors of nearby points. As a result, the remaining neighbors of every point are forced to be the common neighbors shared by its nearby points, and the number of such common neighbors is limited so that sparsity is induced. In addition, as will be illustrated in Section~\ref{sec::fast-pgd}, our proposed fast proximal method always finds a sparse solution efficiently. (\ref{eq:srsg-cdi}) is equivalent to
\bsal\label{eq:srsg-cdi-expand}
&\mathop {\min }\limits_{\bZ^{i} \in \RR^{n}, \bZ_i^i=0} F(\bZ^i) = \ltwonorm{{\bx_i} - {\bX}\bZ^{i}}^2 + \gamma \sum\limits_{t=1}^n c_{ti} \1_{\bZ_t^i \neq 0}.
\esal%
Problem (\ref{eq:srsg-cdi-expand}) is non-convex due to the non-convex regularization term $\sum\limits_{t=1}^n c_{ti} \1_{\bZ_t^i \neq 0}$, and an optimization algorithm is supposed to find a critical point for this problem. The definition of critical point for general Frechet subdifferentiable functions is defined as follows.
\begin{definition}
\label{def:subdifferential-critical-points}
\normalfont
{\rm (Subdifferential and critical points \citep{Rockafellar-Wets2009-variational-analysis})}
Given a non-convex function $f \colon \RR^n \to \RR \cup \{+\infty\}$ which is a proper and lower semi-continuous function,
\begin{itemize}[leftmargin=*]
\item For a given $\bx \in {\rm dom}f$, the Frechet subdifferential of $f$ at $\bx$, denoted by $\tilde \partial f(\bx)$, is
the set of all vectors $\bu \in \RR^n$ which satisfy
\begin{align*}
&\liminf\limits_{\by \neq \bx,\by \to \bx} \frac{f(\by)-f(\bx)-\langle \bu, \by-\bx \rangle}{\ltwonorm{\by-\bx}} \ge 0.
\end{align*}%
\item The limiting-subdifferential of $f$ at $\bx \in \RR^n$, denoted by $\partial f(\bx)$, is defined by
\begin{align*}
\partial f(\bx) &= \{\bu \in \RR^n \colon \exists \bx^k \to \bx, f(\bx^k) \to f(\bx), \\
&\tilde \bu^k \in {\tilde \partial f}(\bx^k) \to \bu\}.
\end{align*}%
\end{itemize}
The point $\bx$ is a critical point of $f$ if $\bzero \in \partial f(\bx)$.
\end{definition}

Before stating optimization algorithms that solve (\ref{eq:srsg-cdi-expand}), we introduce a simpler problem below with a simplified objective $\tilde F$. Compared to (\ref{eq:srsg-cdi-expand}), it has regularization for $\bZ_k^i$ only for positive $c_{ti}$:
\bal\label{eq:srsg-cdi-simple}
&\mathop {\min }\limits_{\bz \in \RR^n, \bz_i = 0} \tilde F(\bz) = \ltwonorm{{\bx_i} - {\bX}\bz}^2 + \gamma \sum\limits_{t\colon 1 \le t \le n, c_{ti} > 0} c_{ti} \1_{\bz_t \neq 0},
\eal%
where $\bZ^{i}$ is replaced by a simpler notation $\bz$.

Proposition 1.1 in the supplementary states that a critical point to problem (\ref{eq:srsg-cdi-simple}) has a limiting-subdifferential arbitrary close to $\bzero$. As a result, we resort to solve (\ref{eq:srsg-cdi-simple}) instead of (\ref{eq:srsg-cdi}). In the next subsection, we propose a novel proximal method with a fast convergence rate locally matching the Nesterov's optimal convergence rate for first-order methods on smooth and convex problems. In the sequel, we define $f(\bz) \triangleq \ltwonorm{{\bx_i} - {\bX}\bz}^2 $ and $h_{\gamma,c}(\bz) \triangleq \gamma \sum\limits_{t \colon 1 \le t \le n, c_{ti} > 0} c_{ti} \1_{\bz_t \neq 0}$.

\subsection{Fast Proximal Gradient Descent by Support Projection}
\label{sec::fast-pgd}
Inspired by Nesterov's accelerated Proximal Gradient Descent (PGD) method \citep{Nesterov2005-nonsmooth-optimization,Nesterov2013-gradient-composite}, we propose a novel and fast PGD method to solve (\ref{eq:srsg-cdi-simple}).
To this end, we first introduce the proximal mapping operator. The proposed fast PGD algorithm, Fast Proximal Gradient Descent with Support Projection (FPGD-SP), is described in Algorithm~\ref{alg:fpgd-sp}.
In Algorithm~\ref{alg:fpgd-sp}, the proximal mapping operator denoted by $\prox$ is defined by
\bal\label{eq:srsg-proximal-mapping}
\prox_{s h_{\gamma,c}}(\bu) &\defeq \argmin\limits_{\bv \in \RR^{n},\bv_i = 0} {\frac{1}{2s}\ltwonorm{\bv - \bu}^2 + h_{\gamma,c}(\bz)} \nonumber \\ &=T_{s,\gamma,c}(\bu),
\eal%
where $s>0$ is the step size, $T_{s,\gamma,c}$ is an element-wise hard thresholding operator. For $1 \le t\le n$,
\begin{small}
\begin{align*}
        &[T_{s,\gamma,c}(\bu)]_t=
        \left\{
        \begin{array}
                {r@{\quad:\quad}l}
                0 & {|\bu_t| \le \sqrt{2s \gamma c_{ti}} \,\, {\rm and} \,\, c_{ti} >0, \,\, {\rm or} \,\, t = i  } \\
                {\bu_t} & {\rm otherwise}
        \end{array}
        \right.
\end{align*}
\end{small}

$\bbP_{\bA}(\bu)$ in (\ref{eq:convex-tilde-vk}) of Algorithm~\ref{alg:fpgd-sp} is a novel support projection operator which returns a vector whose elements with indices in $\bA$ are the same as those in $\bu$, while all the other elements vanish. Namely, $[\bbP_{\bA}(\bu)]_k = \bu_k$ if $k \in \bA$, and $[\bbP_{\bA}(\bu)]_k = 0$ otherwise. Theorem~\ref{theorem::fpgd-sp} below shows the locally optimal convergence rate of $\cO(\frac{1}{k^2})$ achieved by FPGD-SP. We define $\cC^i \defeq \{t \colon 1 \le t \le n, c_{ti} > 0\}$
and we use $\cC$ to denote $\cC^i$ when no confusion arises.

\begin{algorithm} [!hbt]
        \caption{Fast Proximal Gradient Descent with Support Projection (FPGD-SP)
        for Problem (\ref{eq:srsg-cdi-simple})}
        \label{alg:fpgd-sp}
\begin{algorithmic}[1]

\State Input:
$\bv^{(0)} \in \RR^n$, sequence $\{\alpha_k\}_{k \ge 1}$ where $\alpha_k = \frac{2}{k+1}$ for any $k \ge 1$, step size $s > 0$, positive sequence  $\{\lambda_k\}_{k \ge 1}$ with $\lambda_k = \eta k$ for any $k \ge 1$ and another step size $\eta > 0$ such that $\lambda_k \alpha_k \le s$.
\State  Set the initial point $\bz^{(0)} = \bv^{(0)}$ and $k=1$.
\State \textbf{\bf for } $k=1,\ldots,$ \,\,\textbf{\bf do }
\bsal
&\bm^{(k)} = (1 - \alpha_k) \bz^{(k-1)} + \alpha_k \bv^{(k-1)}
\label{eq:convex-mk} \\
&\bz^{(k)} = \prox_{s h_{\gamma,c} }(\bm^{(k)} - s \nabla f(\bm^{(k)})) \label{eq:convex-zk} \\
&\tilde \bv^{(k)} = \bv^{(k-1)} - \lambda_k \nabla f(\bm^{(k)}) \label{eq:convex-tilde-vk} \\
&\bv^{(k)} = \bbP_{(\cC \cap \supp{\bz^{(k)}}) \cup \stcomp{\cC}}(\tilde \bv^{(k)}) \label{eq:convex-vk}
\esal%
\end{algorithmic}
\end{algorithm}

\begin{algorithm}[!h]
\renewcommand{\algorithmicrequire}{\textbf{Input:}}
\renewcommand\algorithmicensure {\textbf{Output:} }

\caption{Learning SRSG}
\label{alg:srsg}
\begin{algorithmic}[1]
\State Input: the data set ${\bX}=\{\bx_i\}_{i=1}^{n}$, the number of clusters $c$, the parameter $\gamma$ and $K$ for SRSG, maximum iteration number $M_c$ for coordinate descent, and maximum iteration number $M_p$ for FPGD-SP, stopping threshold $\varepsilon$.\\
\State $r=1$, initialize the sparse code matrix as ${\bZ}^{(0)} = \bZ^{(\ell^{1})}$.

\While{$r \le M_c$}
\State{Obtain ${\bZ}^{(r)}$ from ${\bZ}^{(r-1)}$ by coordinate descent. In $i$-th ($1 \le i \le n$) step of the $r$-th iteration of coordinate descent, solve (\ref{eq:srsg-cdi-simple}) by FPGD-SP descrbibed in Algorithm~\ref{alg:fpgd-sp}.}
\If{$|L(\bZ^{(r)})-L(\bZ^{(r-1)})| < \varepsilon$}
\State{\textbf{break}}
\Else
\State{$r=r+1$.}
\EndIf
\EndWhile
\State{Obtain the sub-optimal sparse code matrix ${\bZ^{*}}$ when the above iterations converge or maximum iteration number is achieved.}
\State{Build the pairwise similarity matrix by symmetrizing $\bZ^{*}$: $\bW^{*} = \frac{|\bZ^{*}|+|\bZ^{*}|^{\top}}{2}$}
\State Output: the sparse graph whose weighted adjacency matrix is $\bW^{*}$.
\end{algorithmic}
\end{algorithm}


\begin{theorem}\label{theorem::fpgd-sp}
\normalfont
Let $\{{\bz}^{(k)}\}$ be the sequence generated by Algorithm~(\ref{alg:fpgd-sp}), and suppose that there exists a constant $G$ such that $\ltwonorm{\nabla f(\bm^{(k)})} \le G$ for all $k \ge 1$. Suppose
$s < \min\set{\frac{2\tau}{G^2},\frac{1}{L_f}}$
with $L_f \defeq 2 \sigma_{\max}(\bX^{\top} \bX)$ and $\tau \defeq \gamma \min_{t \colon 1 \le t \le n, c_{ti} > 0} c_{ti}$, then there exists a finite $k_0 >0$ such that for all $k \ge k_0$, $\supp{{\bz}^{(k)}} = \cS \subseteq [n]$. Furthermore, let $\bz^*  = \argmin_{\bz \colon \supp{\bz_{\cC}} = \cS , \bz_i = 0} f(\bz)$, then
\vspace{-.1in}
\bal
&0 \le \tilde F({\bz}^{(k)}) - \tilde F(\bz^*) \le \frac{\bU^{(k_0)}}{k(k+1)}, \label{eq:fast-pgd-value-converge}
\eal%
where $\bU^{(k_0)} \defeq k_0(k_0-1)\pth{{\tilde F}(\bz^{(k_0-1)}) -{\tilde F}(\bz^*)} + \frac{\ltwonorm{\bv^{(k_0-1)} - \bz^*}^2}{\eta}$.
\end{theorem}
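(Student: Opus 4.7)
The plan is to split the argument to match the theorem's two conclusions. First, I show a finite support-identification time $k_0$ past which $\supp{\bz^{(k)}}$ is constant and equal to some set $\cS$. Second, I show that for $k \ge k_0$ the FPGD-SP iteration reduces to Nesterov's accelerated gradient method applied to the smooth convex quadratic $f$ restricted to the affine subspace $V_{\cS} \defeq \{\bz \in \RR^n : \bz_i = 0 \text{ and } \bz_t = 0\ \forall\, t \in \cC \setminus \cS\}$, and invoke a standard estimating-sequences argument to obtain the $\cO(1/k^2)$ bound (\ref{eq:fast-pgd-value-converge}).

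For the identification step, the key ingredient is the hard-thresholding form of $T_{s,\gamma,c}$ combined with the step-size condition. Rewriting $s < 2\tau/G^2$ as $sG < \sqrt{2s\tau} \le \sqrt{2s\gamma c_{ti}}$ for every $t \in \cC$ shows that the magnitude of the gradient perturbation $s\nabla f(\bm^{(k)})$ is strictly smaller than every positive threshold of $T_{s,\gamma,c}$. Meanwhile, $s < 1/L_f$ together with the defining inequality of the prox yields a sufficient-decrease inequality of the form $\tilde F(\bz^{(k)}) \le \tilde F(\bm^{(k)}) - c_0 \ltwonorm{\bz^{(k)} - \bm^{(k)}}^2$ for some $c_0>0$, from which I deduce boundedness of $\{\bz^{(k)}\}$ and $\ltwonorm{\bz^{(k)} - \bm^{(k)}} \to 0$. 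The strict threshold gap together with this vanishing step precludes a coordinate $t \in \cC$ from oscillating across $\pm\sqrt{2s\gamma c_{ti}}$: once $|\bm^{(k)}_t|$ sits more than $sG$ away from the threshold, its activity status after the prox is decided, and a routine no-oscillation argument then produces a finite $k_0$ after which the support is frozen.

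For the rate step, once $k \ge k_0$ the prox in (\ref{eq:convex-zk}) acts as the identity on the surviving coordinates, so it coincides with a plain gradient step confined to $V_{\cS}$; the support projection (\ref{eq:convex-vk}) likewise keeps $\bv^{(k)} \in V_{\cS}$. On $V_{\cS}$ the term $h_{\gamma,c}$ is constant, $\tilde F$ differs from $f$ only by a constant, and $\bz^*$ is the minimizer of $f$ on $V_{\cS}$. The triple $(\bm^{(k)}, \bz^{(k)}, \bv^{(k)})$ for $k \ge k_0$ then executes Nesterov's three-sequence accelerated scheme on the smooth convex problem $\min_{V_{\cS}} f$, with momentum weights $\alpha_k = 2/(k+1)$ and dual step sizes $\lambda_k = \eta k$. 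I would close the argument with the Lyapunov potential
\begin{align*}
\Phi_k \defeq k(k-1)\bigl(\tilde F(\bz^{(k-1)}) - \tilde F(\bz^*)\bigr) + \tfrac{1}{\eta}\ltwonorm{\bv^{(k-1)} - \bz^*}^2,
\end{align*}
and verify $\Phi_{k+1} \le \Phi_k$ using (i) the descent lemma between $\bm^{(k)}$ and $\bz^{(k)}$ (which needs $s \le 1/L_f$), (ii) two convexity inequalities for $f$ anchored at $\bm^{(k)}$ with reference points $\bz^{(k-1)}$ and $\bz^*$, and (iii) the coupling $\lambda_k \alpha_k \le s$, which cancels the cross-terms between the $\bv$- and $\bz$-updates in the spirit of Tseng's unified framework. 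Telescoping from $k_0$ to $k$ converts $\Phi_{k+1} \le \Phi_{k_0}$ into precisely (\ref{eq:fast-pgd-value-converge}) with the stated $\bU^{(k_0)}$.

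The principal obstacle is the support identification: since $h_{\gamma,c}$ is a weighted $\ell^0$ penalty, the objective is neither convex nor continuous and standard proximal convergence theory does not apply off the shelf. The whole argument hinges on the two ingredients above, namely the strict threshold gap enforced by $s < 2\tau/G^2$ and the vanishing step implied by $s < 1/L_f$, to preclude indefinite coordinate-wise oscillation. Once finite-time identification is in hand, the remainder is a relatively mechanical reduction to Nesterov's acceleration on the active subspace.
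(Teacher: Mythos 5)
Your second half (the rate after identification) is essentially the paper's own argument: the same Lyapunov potential $k(k-1)(\tilde F(\bz^{(k-1)})-\tilde F(\bz^*))+\tfrac{1}{\eta}\ltwonorm{\bv^{(k-1)}-\bz^*}^2$, the same three ingredients (descent lemma, convexity of $f$ at $\bm^{(k)}$, and the coupling $\lambda_k\alpha_k\le s$ with $\lambda_{k+1}\ge\tfrac{k+1}{k}\lambda_k$), and the same telescoping from $k_0$. No issue there.

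The gap is in your support-identification step. You treat it as a no-oscillation problem and lean on two claims that do not hold up before identification: (i) that a sufficient-decrease inequality between $\bm^{(k)}$ and $\bz^{(k)}$ yields $\ltwonorm{\bz^{(k)}-\bm^{(k)}}\to 0$ --- but in an accelerated scheme $\tilde F(\bm^{(k)})$ need not be comparable to $\tilde F(\bz^{(k-1)})$, so this inequality does not telescope and the summability of $\ltwonorm{\bz^{(k)}-\bm^{(k)}}^2$ is only obtained from the convergence analysis that itself presupposes identification (a circularity); and (ii) that ``once $|\bm^{(k)}_t|$ sits more than $sG$ away from the threshold its status is decided'' --- this does not produce a finite $k_0$, since nothing you invoke prevents $|\bm_t^{(k)}|$ from hovering within $sG$ of $\sqrt{2s\gamma c_{ti}}$ indefinitely. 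The mechanism you are missing is the one the paper actually uses, and it makes the whole oscillation question moot: the support of $\bz^{(k)}_{\cC}$ is \emph{monotonically non-increasing}. Indeed, the projection $\bbP_{(\cC\cap\supp{\bz^{(k)}})\cup\stcomp{\cC}}$ forces $\supp{\bv^{(k)}_{\cC}}\subseteq\supp{\bz^{(k)}_{\cC}}$, hence $\supp{\bm^{(k+1)}_{\cC}}\subseteq\supp{\bz^{(k)}_{\cC}}$; for any $t\in\cC$ with $\bm^{(k+1)}_t=0$ the prox input at coordinate $t$ is exactly $-s[\nabla f(\bm^{(k+1)})]_t$, of magnitude at most $sG<\sqrt{2s\tau}\le\sqrt{2s\gamma c_{ti}}$, so the hard threshold keeps that coordinate at zero. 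A coordinate in $\cC$ that dies stays dead, and a nested sequence of subsets of a finite set stabilizes in finitely many steps --- no limit argument, boundedness, or vanishing-step estimate is needed. Your route, as written, does not close; the projection step (\ref{eq:convex-vk}) is not a passive bookkeeping device but the reason identification happens at all.
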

Theorem~\ref{theorem::fpgd-sp} shows that FPGD-SP has a fast local convergence rate of $\cO(\frac{1}{k^2})$. This convergence rate is locally optimal because $\cO(\frac{1}{k^2})$ is the Nesterov's optimal convergence rate of first-order methods on smooth and convex problems with Lipschitz continuous gradient. While the objective function $\tilde F$ is non-convex and nonsmooth, FPGD-SP still achieves the Nesterov’s optimal convergence rate.

\textbf{Key Idea in the Proof of Theorem~\ref{theorem::fpgd-sp}.} Let $\bz_{\cC} \in \RR^{\abth{\cC}}$ be the vector formed by elements of $\bz$ with indices in the set $\cC$.
The proof of Theorem~\ref{theorem::fpgd-sp} is based on the idea that when the step size for gradient descent is small enough, the support of ${\bz_{\cC}^{(k)}}$ shrinks during iterations of FPGD-SP, so the optimization through FPGD-SP can be divided into a finite number of stages wherein the support of ${\bz^{(k)}}$ belonging to the same stage remains unchanged. Therefore, the objective function $\tilde F$ is convex when restricted to the final stage. The optimal convergence rate $\cO(\frac{1}{k^2})$ in Theorem~\ref{theorem::fpgd-sp} is achieved on the final stage. 

Thanks to the property of support shrinkage, the result of FPGD-SP is always sparser than the initialization point $\bz^{(0)}$, so SRSG does not need the $\ell^{1}$-norm to impose sparsity on the solutions to (\ref{eq:srsg}) or (\ref{eq:srsg-cdi-simple}). In our experiments, the initialization point $\bz^{(0)}$ is sparse, which can be chosen as the sparse code generated by vanilla sparse graph. Due to its faster convergence rate than the vanilla PGD, we employ FPGD-SP to solve (\ref{eq:srsg-cdi-simple}) for each step of coordinate descent in the construction of SRSG.

In practice, the iteration of Algorithm~\ref{alg:fpgd-sp} is terminated when a maximum iteration number is achieved or certain stopping criterion is satisfied. When the FPGD-SP method converges or terminates, the step of coordinate descent for problem (\ref{eq:srsg-cdi-simple}) for some $\bZ^i$ is finished and the coordinate descent algorithm proceeds to optimize other sparse codes. We initialize $\bZ$ as ${\bZ}^{(0)} = \bZ_{\ell^{1}}$ and $\bZ_{\ell^{1}}$ is the sparse codes generated by vanilla sparse graph through solving (\ref{eq:l1graph}) with some proper weighting parameter $\lambda_{\ell^{1}}$. In all the experimental results shown in the next section, we empirically set $\lambda_{\ell^{1}} = 0.1$. The data clustering algorithm by SRSG is described in Algorithm~\ref{alg:srsg}. Spectral clustering is performed on the output SRSG produced by Algorithm~\ref{alg:srsg} to generate data clusters for data clustering.
\begin{figure*}[!h]
    \centering
    \includegraphics[width=0.4\textwidth]{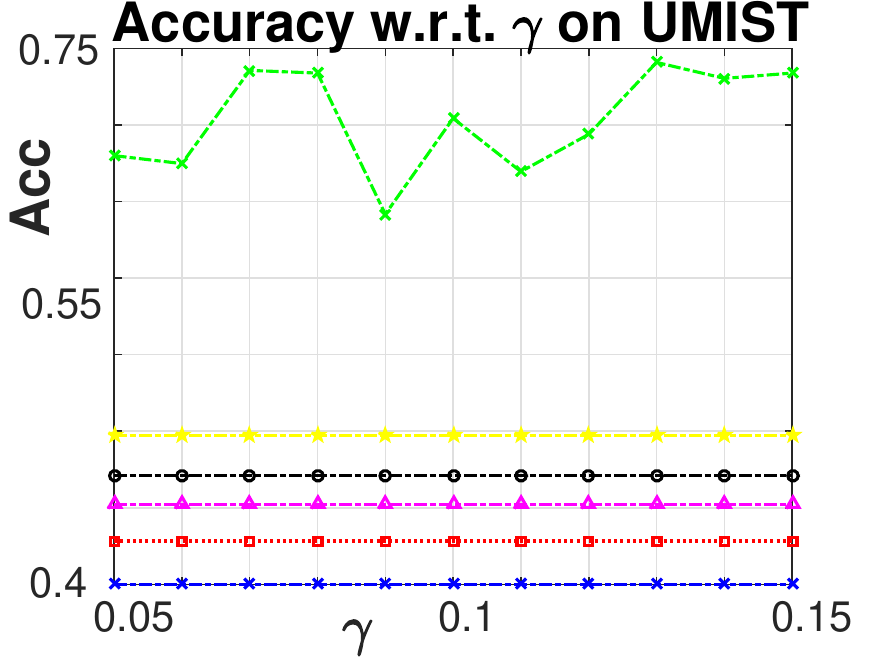}
    \includegraphics[width=0.4\textwidth]{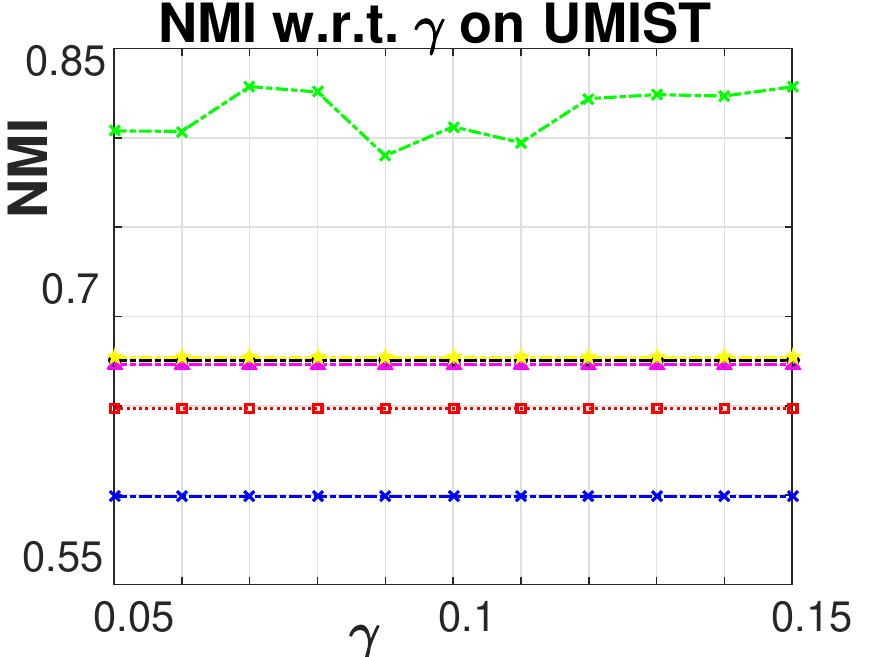}
    \includegraphics[width=0.4\textwidth]{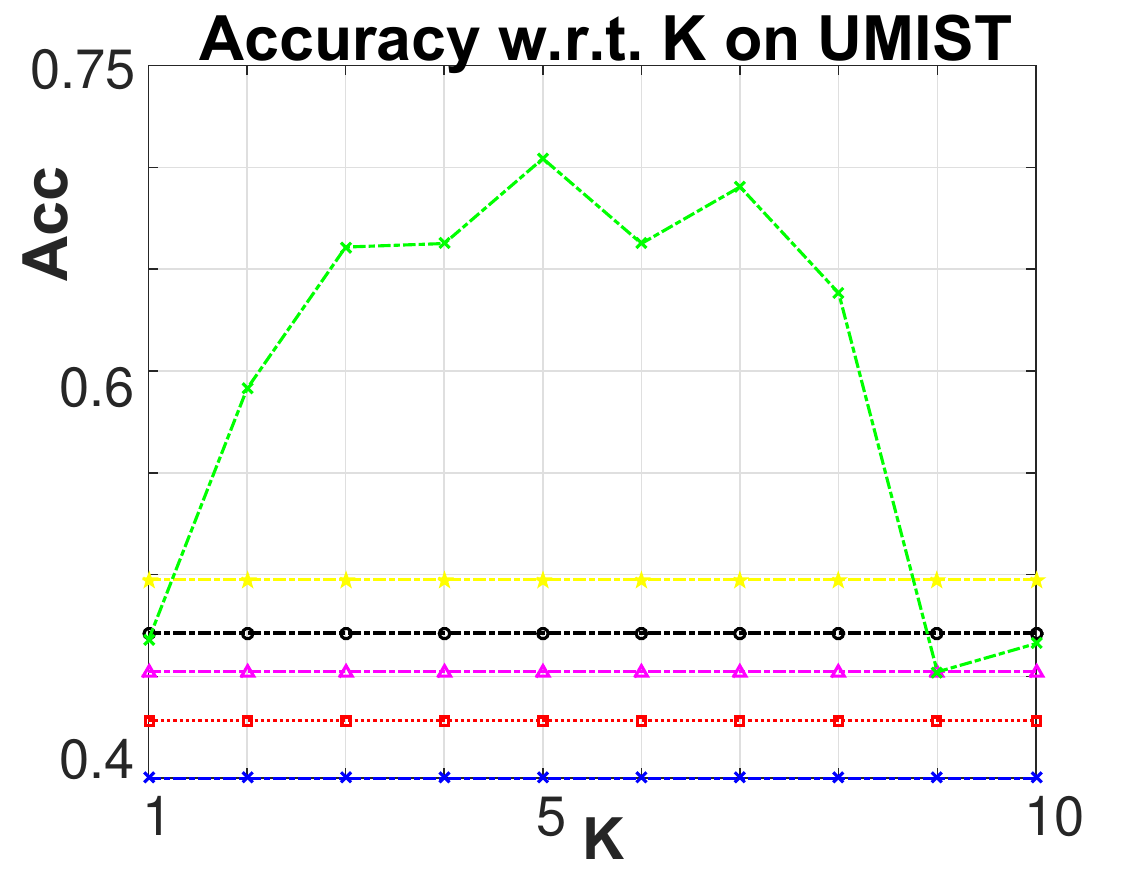}
    \includegraphics[width=0.4\textwidth]{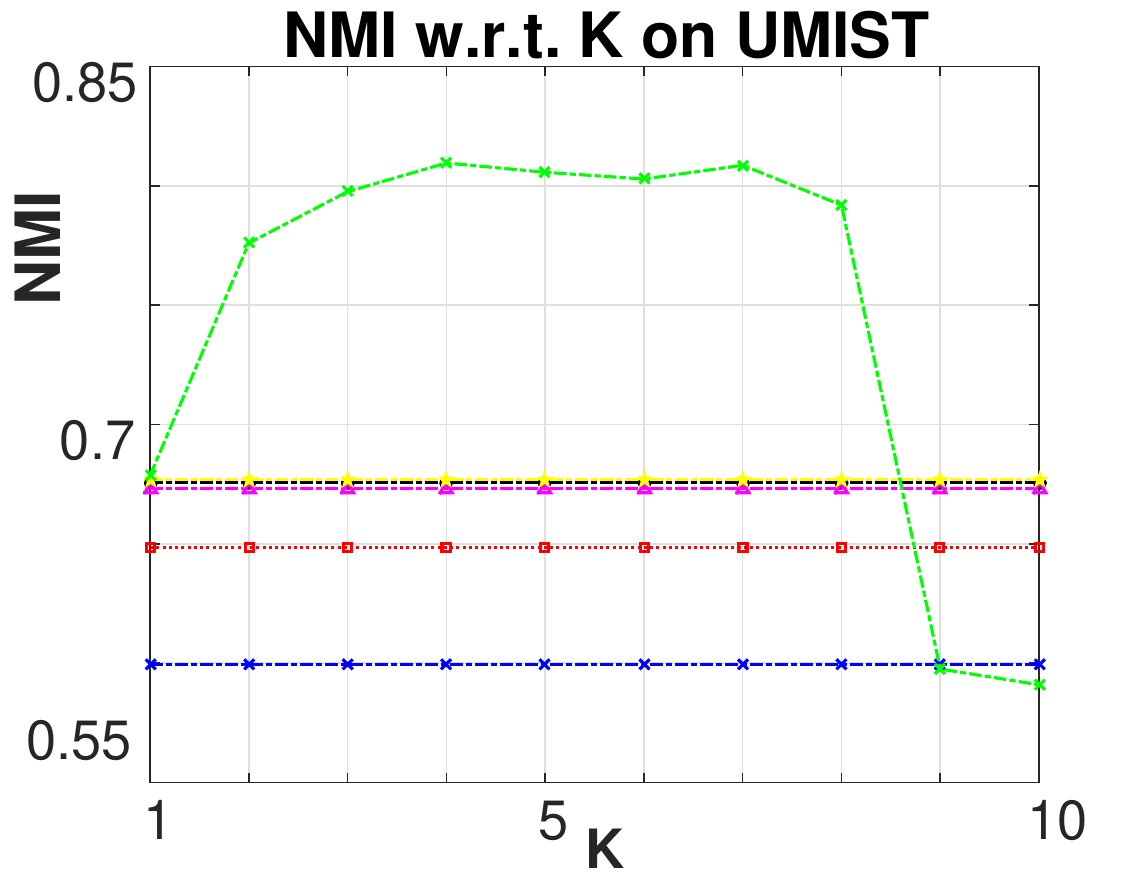}
    \includegraphics[width=0.12\textwidth]{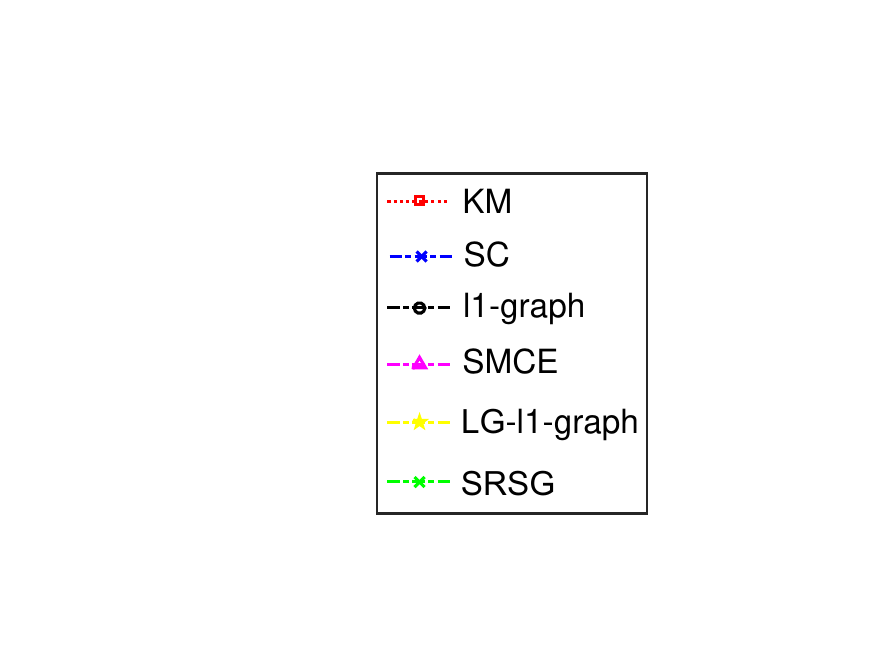}
    \caption{Parameter sensitivey on the UMIST Face Data, from left to right: Accuracy with respect to different values of $\gamma$; NMI with respect to different values of $\gamma$; Accuracy with respect to different values of $K$; NMI with respect to different values of $K$}
    \label{fig:umist-parameter-sensitivity}
\end{figure*}
\subsection{Time Complexity of Building SRSG Using FPGD-SP}
\label{sec:l0l1graph-complexity}
Let the maximum iteration number of coordinate descent be $M_c$, and maximum iteration number be $M_p$ for the FPGD-SP method used to solve (\ref{eq:srsg-cdi}). It can be verified that each iteration of Algorithm 1 has a time complexity of $\cO(d n)$ where $s_0$ is the cardinality of the support of the initialization point for FPGD-SP. The overall time complexity of running the coordinate descent for SRSG is $\cO({M_c}{M_p}n^2d)$.

\begin{table*}[!htb]
\centering
\small
\caption{ Clustering Results on COIL-20 and  COIL-100 Database. $c$ in the leftmost column indicates that the first $c$ clusters of the entire data set are used for clustering.}
\resizebox{0.7\textwidth}{!}{
\begin{tabular}{|c|c|c|c|c|c|c|c|}
  \hline
  \begin{tabular}{c}COIL-20 \\ \hline
  \# Clusters \end{tabular}

                           &Measure & KM    & SC     &$\ell^{1}$-graph   &SMCE    &LR-$\ell^{1}$-graph   &SRSG   \\\hline

  \multirow{2}{*}{c = 4}   &AC      &0.6625 &0.6701  &\textbf{1.0000}    &0.7639  &0.7188      &\textbf{1.0000} \\ \cline{2-8}
                           &NMI     &0.5100 &0.5455  &\textbf{1.0000}    &0.6741  &0.6129      &\textbf{1.0000} \\ \hline

  \multirow{2}{*}{c = 8}   &AC      &0.5157 &0.4514  &0.7986             &0.5365  &0.6858      &\textbf{0.9705} \\ \cline{2-8}
                           &NMI     &0.5342 &0.4994  &0.8950             &0.6786  &0.6927      &\textbf{0.9581} \\ \hline

  \multirow{2}{*}{c = 12}  &AC      &0.5823 &0.4954  &0.7697             &0.6806  &0.7512      &\textbf{0.8333} \\ \cline{2-8}
                           &NMI     &0.6653 &0.6096  &0.8960             &0.8066  &0.7836      &\textbf{0.9160} \\ \hline

  \multirow{2}{*}{c = 16}  &AC      &0.6689 &0.4401  &0.8264             &0.7622  &0.8142      &\textbf{0.8750} \\ \cline{2-8}
                           &NMI     &0.7552 &0.6032  &0.9294             &0.8730  &0.8511      &\textbf{0.9435} \\ \hline

  \multirow{2}{*}{c = 20}  &AC      &0.6504 &0.4271  &0.7854             &0.7549  &0.7771      &\textbf{0.8208} \\ \cline{2-8}
                           &NMI     &0.7616 &0.6202  &0.9148             &0.8754  &0.8534      &\textbf{0.9297} \\ \hline
  \begin{tabular}{c}COIL-100 \\ \hline
  \# Clusters \end{tabular}

                           &Measure & KM    & SC     &$\ell^{1}$-graph   &SMCE    &LR-$\ell^{1}$-graph   &SRSG   \\\hline

  \multirow{2}{*}{c = 20}  &AC      &0.5875 &0.4493  &0.5340             &0.6208  &0.6681      &\textbf{0.9250} \\ \cline{2-8}
                           &NMI     &0.7448 &0.6680  &0.7681             &0.7993  &0.7933      &\textbf{0.9682} \\ \hline

  \multirow{2}{*}{c = 40}  &AC      &0.5774 &0.4160  &0.5819             &0.6028  &0.5944      &\textbf{0.8465} \\ \cline{2-8}
                           &NMI     &0.7662 &0.6682  &0.7911             &0.7919  &0.7991      &\textbf{0.9484} \\ \hline

  \multirow{2}{*}{c = 60}  &AC      &0.5330 &0.3225  &0.5824             &0.5877  &0.6009      &\textbf{0.7968} \\ \cline{2-8}
                           &NMI     &0.7603 &0.6254  &0.8310             &0.7971  &0.8059      &\textbf{0.9323} \\ \hline

  \multirow{2}{*}{c = 80}  &AC      &0.5062 &0.3135  &0.5380             &0.5740  &0.5632      &\textbf{0.7970} \\ \cline{2-8}
                           &NMI     &0.7458 &0.6071  &0.8034             &0.7931  &0.7934      &\textbf{0.9240} \\ \hline

  \multirow{2}{*}{c = 100} &AC      &0.4928 &0.2833  &0.5310             &0.5625  &0.5493      &\textbf{0.7425} \\ \cline{2-8}
                           &NMI     &0.7522 &0.5913  &0.8015             &0.8057  &0.8055      &\textbf{0.9105} \\ \hline

\end{tabular}
}
\label{table:coil20-100}
\end{table*}
\section{Experimental Results}

\begin{table*}[!hbt]
\centering
\small
\caption{ Clustering Results on Various Face Datasets, where CMU Multi-PIE contains the facial images captured in four sessions (S$1$ to S$4$)}
\resizebox{0.7\textwidth}{!}{
\begin{tabular}{|c|c|c|c|c|c|c|c|}
  \hline
  \begin{tabular}{c}Yale-B \\ \hline
  \# Clusters \end{tabular}

                           &Measure & KM    & SC     &$\ell^{1}$-graph   &SMCE    &LR-$\ell^{1}$-graph   &SRSG   \\\hline

  \multirow{2}{*}{c = 10}  &AC      &0.1780 &0.1937  &0.7580             &0.3672  &0.4563      &\textbf{0.8750} \\ \cline{2-8}
                           &NMI     &0.0911 &0.1278  &0.7380             &0.3264  &0.4578      &\textbf{0.8134} \\ \hline

  \multirow{2}{*}{c = 15}  &AC      &0.1549 &0.1748  &0.7620             &0.3761  &0.4778      &\textbf{0.7754} \\ \cline{2-8}
                           &NMI     &0.1066 &0.1383  &0.7590             &0.3593  &0.5069      &\textbf{0.7814} \\ \hline

  \multirow{2}{*}{c = 20}  &AC      &0.1227 &0.1490  &0.7930             &0.3542  &0.4635      &\textbf{0.8376} \\ \cline{2-8}
                           &NMI     &0.0924 &0.1223  &0.7860             &0.3789  &0.5046      &\textbf{0.8357} \\ \hline

  \multirow{2}{*}{c = 30}  &AC      &0.1035 &0.1225  &0.8210             &0.3601  &0.5216      &\textbf{0.8475} \\ \cline{2-8}
                           &NMI     &0.1105 &0.1340  &0.8030             &0.3947  &0.5628      &\textbf{0.8652} \\ \hline

  \multirow{2}{*}{c = 38}  &AC      &0.0948 &0.1060  &0.7850             &0.3409  &0.5091      &\textbf{0.8500} \\ \cline{2-8}
                           &NMI     &0.1254 &0.1524  &0.7760             &0.3909  &0.5514      &\textbf{0.8627} \\ \hline
  \begin{tabular}{c}CMU PIE \\ \hline
  \# Clusters \end{tabular}

                           &Measure & KM    & SC     &$\ell^{1}$-graph   &SMCE    &LR-$\ell^{1}$-graph   &SRSG   \\\hline

  \multirow{2}{*}{c = 20}  &AC      &0.1327 &0.1288  &0.2329             &0.2450  &0.3076      &\textbf{0.3294} \\ \cline{2-8}
                           &NMI     &0.1220 &0.1342  &0.2807             &0.3047  &0.3996      &\textbf{0.4205} \\ \hline

  \multirow{2}{*}{c = 40}   &AC     &0.1054 &0.0867  &0.2236             &0.1931  &0.3412      &\textbf{0.3525} \\ \cline{2-8}
                           &NMI     &0.1534 &0.1422  &0.3354             &0.3038  &0.4789      &\textbf{0.4814} \\ \hline

  \multirow{2}{*}{c = 68}  &AC      &0.0829 &0.0718  &0.2262             &0.1731  &0.3012      &\textbf{0.3156} \\ \cline{2-8}
                           &NMI     &0.1865 &0.1760  &0.3571             &0.3301  &\textbf{0.5121}   &{0.4800} \\ \hline
  \begin{tabular}{c}CMU Multi-PIE \\ \hline
  \# Clusters \end{tabular}

                       &Measure & KM    & SC     &$\ell^{1}$-graph   &SMCE    &LR-$\ell^{1}$-graph   &SRSG   \\\hline


  \multirow{2}{*}{MPIE S$1$}
                           &AC      &0.1167 &0.1309  &0.5892             &0.1721  &0.4173      &\textbf{0.6815} \\ \cline{2-8}
                           &NMI     &0.5021 &0.5289  &0.7653             &0.5514  &0.7750      &\textbf{0.8854} \\ \hline

  \multirow{2}{*}{MPIE S$2$}
                           &AC      &0.1330 &0.1437  &0.6994             &0.1898  &0.5009      &\textbf{0.7364} \\ \cline{2-8}
                           &NMI     &0.4847 &0.5145  &0.8149             &0.5293  &0.7917      &\textbf{0.9048} \\ \hline

  \multirow{2}{*}{MPIE S$3$}
                           &AC      &0.1322 &0.1441  &0.6316             &0.1856  &0.4853      &\textbf{0.7138} \\ \cline{2-8}
                           &NMI     &0.4837 &0.5150  &0.7858             &0.5155  &0.7837      &\textbf{0.8963} \\ \hline

  \multirow{2}{*}{MPIE S$4$}
                           &AC      &0.1313 &0.1469  &0.6803             &0.1823  &0.5246      &\textbf{0.7649} \\ \cline{2-8}
                           &NMI     &0.4876 &0.5251  &0.8063             &0.5294  &0.8056      &\textbf{0.9220} \\ \hline

  \begin{tabular}{c}UMIST Face \\ \hline
  \# Clusters \end{tabular}

                           &Measure & KM    & SC     &$\ell^{1}$-graph   &SMCE    &LR-$\ell^{1}$-graph   &SRSG   \\\hline

  \multirow{2}{*}{c = 4}   &AC      &0.4848 &0.5691  &0.4390             &0.5203  &\textbf{0.5854}&\textbf{0.5854} \\ \cline{2-8}
                           &NMI     &0.2889 &0.4351  &0.4645             &0.3314  &\textbf{0.4686}&0.4640 \\ \hline

  \multirow{2}{*}{c = 8}   &AC      &0.4330 &0.4789  &0.4836             &0.4695  &0.5399      &\textbf{0.6948} \\ \cline{2-8}
                           &NMI     &0.5373 &0.5236  &0.5654             &0.5744  &0.5721      &\textbf{0.7333} \\ \hline

  \multirow{2}{*}{c = 12}  &AC      &0.4478 &0.4655  &0.4505             &0.4955  &0.5706      &\textbf{0.6967} \\ \cline{2-8}
                           &NMI     &0.6121 &0.6049  &0.5860             &0.6445  &0.6994      &\textbf{0.7929} \\ \hline

  \multirow{2}{*}{c = 16}  &AC      &0.4297 &0.4539  &0.4124             &0.4747  &0.4700      &\textbf{0.6544} \\ \cline{2-8}
                           &NMI     &0.6343 &0.6453  &0.6199             &0.6909  &0.6714      &\textbf{0.7668} \\ \hline

  \multirow{2}{*}{c = 20}  &AC      &0.4216 &0.4174  &0.4087             &0.4452  &0.4991      &\textbf{0.7026} \\ \cline{2-8}
                           &NMI     &0.6377 &0.6095  &0.6111             &0.6641  &0.6893      &\textbf{0.8038} \\ \hline
\end{tabular}
}
\label{table:face-results}
\end{table*}

The superior clustering performance of SRSG is demonstrated by extensive experimental results on various data sets. SRSG is compared to K-means (KM), Spectral Clustering (SC), $\ell^{1}$-graph, Sparse Manifold Clustering and Embedding (SMCE) \citep{ElhamifarV11}, and LR-$\ell^{1}$-graph introduced in Section~\ref{sec::preliminary}.

\subsection{Evaluation Metric}
Two measures are used to evaluate the performance of the clustering
methods, which are the accuracy and the Normalized Mutual Information (NMI) \citep{Zheng04}. Let the predicted label of the datum $\bx_i$ be $\hat y_i$ which is produced by the clustering method, and $y_i$ is its ground truth label. The accuracy is defined as
\bal\label{eq:accuracy}
&{Accuracy} = \frac{\1_{{\Omega}(\hat y_i) \ne y_i}}{n},
\eal%
where $\1$ is the indicator function, and $\Omega$ is the best permutation mapping
function by the Kuhn-Munkres algorithm \citep{plummer1986}. The more predicted labels match the ground truth ones, the more accuracy value is obtained.

Let $\hat X$ be the index set obtained from the predicted labels $\{\hat y_i\}_{i=1}^n$, $X$ be the index set from the ground truth labels $\{y_i\}_{i=1}^n$,
$H( {\hat X} )$ and $H( X )$ be the entropy of $\hat X$ and $X$, then the normalized mutual information (NMI) is defined as
\bal\label{eq:NMI}
&NMI( {\hat X,X} ) = \frac{{MI( {\hat X,X} )}}{{\max \{ {H( {\hat X} ),H( X )}\}}},
\eal%
where ${MI( {\hat X,X} )}$ is the mutual information between ${\hat X}$ and $X$. 

\subsection{Clustering on UCI Data Sets}
We conduct experiments on three real data sets from UCI machine learning repository \citep{Asuncion07}, i.e. Heart, Ionosphere, Breast Cancer (Breast), to reveal the clustering performance of SRSG on general data sets. The clustering results on these three data sets are shown in Table~\ref{table:uci}.
\begin{table*}[!htb]
\centering
\small
\caption{\small Clustering Results on three UCI Data Sets}
\begin{tabular}{|c|c|c|c|c|c|c|c|}
  \hline
  Data Set

                              &Measure & KM    & SC     &$\ell^{1}$-graph   &SMCE    &LR-$\ell^{1}$-graph   &SRSG   \\\hline

  \multirow{2}{*}{Heart}      &AC      &0.5889 &0.6037  &0.6370             &0.5963  &0.6259      &\textbf{0.6481} \\ \cline{2-8}
                              &NMI     &0.0182 &0.0269  &0.0529             &0.0255  &0.0475      &\textbf{0.0637} \\ \hline

  \multirow{2}{*}{Ionosphere} &AC      &0.7095 &0.7350  &0.5071             &0.6809  &0.7236      &\textbf{0.7635} \\ \cline{2-8}
                              &NMI     &0.1285 &0.2155  &0.1117             &0.0871  &0.1621      &\textbf{0.2355} \\ \hline

 \multirow{2}{*}{Breast}      &AC      &0.8541 &0.8822  &0.9033             &0.8190  &\textbf{0.9051} &\textbf{0.9051} \\ \cline{2-8}
                              &NMI     &0.4223 &0.4810  &0.5258             &0.3995  &0.5249      &\textbf{0.5333} \\ \hline

\end{tabular}
\label{table:uci}
\end{table*}
\subsection{Clustering on COIL-20 and COIL-100 Data}
COIL-20 Database has $1440$ images of resolution $32 \times 32$ for $20$ objects, and the background is removed in all images. The dimension of this data is $1024$. Its enlarged version, COIL-100 Database, contains $100$ objects with $72$ images of resolution $32 \times 32$ for each object. The images of each object were taken $5$ degrees apart when each object was rotated on a turntable. The clustering results on these two data sets are shown in Table~\ref{table:coil20-100}. It can be observed that LR-$\ell^{1}$-graph produces better clustering accuracy than $\ell^{1}$-graph, since graph regularization produces locally smooth sparse codes aligned to the local manifold structure of the data. Using the $\ell^{0}$-norm in the graph regularization term to render the sparse graph that is better aligned to the geometric structure of the data, SRSG always performs better than all other competing methods.

\subsection{Clustering on Yale-B, CMU PIE, CMU Multi-PIE, UMIST Face Data, MNIST, miniImageNet}
The Extended Yale Face Database B contains face images for $38$ subjects with $64$ frontal face images taken under different illuminations for each subject. CMU PIE face data contains cropped face images of size $32 \times 32$ for $68$ persons, and there are around $170$ facial images for each person under different illumination and expressions, with a total number of $11554$ images. CMU Multi-PIE (MPIE) data \citep{GrossMultiPIE} contains the facial images captured in four sessions. The UMIST Face Database consists of $575$ images of size $112 \times 92$ for $20$ people. Each person is shown in a range of poses from profile to frontal views - each in a separate directory labelled $1a, 1b, \ldots, 1t$ and images are numbered consecutively as they were taken. The clustering results on these four face data sets are shown in Table~\ref{table:face-results}. We conduct an extensive experiment on the popular face data sets in this subsection, and we observe that SRSG always achieve the highest accuracy, and best NMI for most cases, revealing the outstanding performance of our method and the effectiveness of manifold regularization on the local sparse graph structure. Figure 1 in the supplementary demonstrates that the sparse graph generated by SRSG effectively removes many incorrect neighbors for many data points through local smoothness of the sparse graph structure, compared to $\ell^{1}$-graph.

\subsection{Parameter Setting}
There are two essential parameters for SRSG, i.e. $\gamma$ for the $\ell^{0}$ regularization term and $K$ for building the adjacency matrix of the KNN graph. We use the sparse codes generated by $\ell^{1}$-graph with weighting parameter $\lambda_{\ell^{1}} = 0.1$ in (\ref{eq:l1graph}) to initialize both SRSG and LR-$\ell^{1}$-graph, and set $\lambda = \gamma=0.1$ in (\ref{eq:srsg}) and $K = 5$ for SRSG empirically throughout all the experiments. The maximum iteration number $M = 100$ and the stopping threshold $\varepsilon = 10^{-5}$. The weighting parameter for the $\ell^{1}$-norm in both $\ell^{1}$-graph and LR-$\ell^{1}$-graph, and the regularization weight $\gamma_{\ell^{2}}$ for LR-$\ell^{1}$-graph is chosen from $[0.1,1]$ for the best performance.

In order to investigate how the performance of SRSG varies with parameter $\gamma$ and $K$, we vary the weighting parameter $\gamma$ and $K$, and illustrate the result in Figure~{\ref{fig:umist-parameter-sensitivity}}. The performance of SRSG is noticeably better than other competing algorithms over a relatively large range of both $\lambda$ and $K$, which demonstrate the robustness of our algorithm with respect to the parameter settings. We also note that a too small $K$ (near to $1$) or too big $K$ (near to $10$) results in under regularization and over regularization.

\section*{Acknowledgement}
This material is based upon work supported by the U.S. Department of Homeland Security under Grant Award Number 17STQAC00001-07-00.
The views and conclusions contained in this document are those of the authors and should not be interpreted as necessarily representing the official policies, either expressed or implied, of the U.S. Department of Homeland Security.
This work was also supported by the 2023 Mayo Clinic
and Arizona State University Alliance for Health
Care Collaborative Research Seed Grant Program.

\section{Conclusion}
We propose a novel Support Regularized Sparse Graph (SRSG) for data clustering, which employs manifold assumption to align the sparse codes of vanilla to the local manifold structure of the data. We use coordinate descent to optimize the objective function of SRSG and propose a novel and fast Proximal Gradient Descent (PGD) with Support Projection to perform each step of the coordinate descent. Our FPGD-SP solves the non-convex optimization problem of SRSG with a proved convergence rate locally matching Nesterov's optimal convergence rate for first-order methods on smooth and convex problems. The effectiveness of SRSG for data clustering is demonstrated by extensive experiment on various real data sets.

\appendix

\section{Proofs and More Technical Results}
\label{sec:proofs}

\begin{proposition}\label{proposition:srsg-cdi-simple}
Define $\cC^{+} = \{t \colon 1 \le t \le n, c_{ti} > 0\}$, and $\cC^{-} = \{t \colon 1 \le t \le n, c_{ti} < 0\}$.
Let $\bz^*$ be a critical point of function $\tilde F$ in eq.(7) of the main paper. Then for arbitrary small positive number $\varepsilon > 0$,
$\tilde \bz^{*,\eps} \in \RR^n$ defined by
\bal\label{eq:proposition-tilde-z-eps}
\tilde \bz_k^{*,\eps} = \begin{cases}
\bz^*_k &\textup{if } \bz_k^* \neq 0 \textup{ or } k \in \cC^{+} \\
\varepsilon &\textup{otherwise}
\end{cases}
\eal
Then there exists $\bu \in \tilde \partial F(\tilde \bz_k^{*,\eps})$ for $F$ in eq.(6) of the main paper such that $\ltwonorm{\bu} \le L_f |\cC|\varepsilon$ where $L_f \defeq 2 \sigma_{\max}(\bX^{\top} \bX)$.
\end{proposition}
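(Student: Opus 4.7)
The plan is to exploit the critical-point condition on $\tilde F$ to pin down $\nabla f(\bz^*)$ at most coordinates, and then use the fact that the Frechet subdifferential of $t \mapsto \1_{t \neq 0}$ at $t = 0$ equals all of $\RR$ to absorb the remaining ``free'' coordinates into $\tilde\partial F(\tilde\bz^{*,\eps})$. First I would decompose $\tilde F = f + h_{\gamma,c}$ with $f(\bz) = \ltwonorm{\bx_i - \bX\bz}^2$ smooth and $h_{\gamma,c}$ separable across coordinates, compute the Frechet and limiting subdifferentials of the scalar indicator $t \mapsto \1_{t \neq 0}$ (both equal $\RR$ at $0$ and $\{0\}$ elsewhere, the limiting equality at $0$ coming from the constant zero sequence), and apply the standard sum rule $\partial \tilde F(\bz^*) = \nabla f(\bz^*) + \partial h_{\gamma,c}(\bz^*)$. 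Reading $\bzero \in \partial \tilde F(\bz^*)$ coordinatewise then yields the rigidity conclusion $[\nabla f(\bz^*)]_k = 0$ for every $k$ outside the ``free set'' $\cF \defeq \{k \in \cC^{+} : \bz^*_k = 0\}$; no condition is obtained on $\cF$ because each such coordinate carries the $\RR$-valued subdifferential.

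The key observation about $\tilde\bz^{*,\eps}$ is that, by construction, $\tilde\bz^{*,\eps}_k = 0$ precisely on $\cF$ and is nonzero elsewhere. Consequently every indicator $\1_{\bz_k \neq 0}$ appearing in $F$ is locally constant at $\tilde\bz^{*,\eps}$ when $k \notin \cF$, while at $k \in \cF$ its Frechet subdifferential is all of $\RR$. I would therefore construct $\bu \in \tilde\partial F(\tilde\bz^{*,\eps})$ by setting $u_k = 0$ for $k \in \cF$ (using the $\RR$ freedom) and $u_k = [\nabla f(\tilde\bz^{*,\eps})]_k$ for $k \notin \cF$ (which is forced). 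Invoking the rigidity conclusion from the first paragraph, $u_k = [\nabla f(\tilde\bz^{*,\eps}) - \nabla f(\bz^*)]_k$ for $k \notin \cF$, so Lipschitz continuity of $\nabla f$ with constant $L_f = 2\sigma_{\max}(\bX^{\top}\bX)$ gives $\ltwonorm{\bu} \le L_f \ltwonorm{\tilde\bz^{*,\eps} - \bz^*}$. The displacement $\tilde\bz^{*,\eps} - \bz^*$ is supported on $\{k : \bz^*_k = 0, k \notin \cC^{+}\}$ with each nonzero entry equal to $\eps$, which delivers a bound of the claimed order.

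The hardest step will be the subdifferential calculation in the first paragraph, specifically the limiting-subdifferential upgrade: one must verify that any sequence $\bz^{(m)} \to \bz^*$ with $\tilde F(\bz^{(m)}) \to \tilde F(\bz^*)$ eventually preserves the zero/nonzero pattern of $\bz^*$ at coordinates in $\cC^{+}$, so that the coordinatewise Frechet analysis really lifts to the limiting subdifferential and the separable sum rule applies. This pattern-stability is what rules out spurious limiting-subdifferential elements from the discontinuous indicator terms and in turn legitimizes the rigidity conditions on $\nabla f(\bz^*)$. Once this careful handling of the discontinuous indicators is in place, the remainder is just the coordinate bookkeeping and single Lipschitz estimate sketched above.
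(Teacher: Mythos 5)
Your proposal is correct and follows essentially the same route as the paper's proof: perturb the zero coordinates outside $\cC^{+}$ to $\varepsilon$ so that every indicator term of $F$ is either locally constant or sits at a coordinate whose Frechet subdifferential is all of $\RR$, then invoke the criticality of $\bz^*$ for $\tilde F$ to reduce the resulting subgradient to $\nabla f(\tilde\bz^{*,\eps})-\nabla f(\bz^*)$ and bound it by $L_f$-Lipschitzness of $\nabla f$ applied to a displacement supported on $\{k\colon \bz_k^*=0,\,k\notin\cC^{+}\}$ with entries $\varepsilon$. The only differences are cosmetic (you set the subgradient to zero on the free set where the paper carries over $\bq_k$, and you make explicit the limiting-versus-Frechet subdifferential verification that the paper leaves implicit).
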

\begin{proof}
Since the only different elements between $\tilde \bz^{*,\eps}$ and $\bz^*$ are those with indices in $\cA = \cC^{-1} \bigcap \{k \colon \bz_k^* = 0\}$, we have
\bals
\ltwonorm{\nabla f(\tilde \bz^*) - \nabla f(\bz^*)} \le L_f \ltwonorm{\tilde \bz^* - \bz^*} \le L_f |\cC^{-1}|  \varepsilon,
\eals
where $L_f = 2 \sigma_{\max}(\bX^{\top} \bX)$. Because $\bz^*$ be a critical point of function $\tilde F$, there exists $\bq \in \tilde \partial h_{\gamma,c}$ such that $\bp \defeq \nabla f(\bz^*) + \bq = \bzero$. Define $\tilde h_{\gamma,c} = \gamma \sum\limits_{k=1}^n c_{ki} \1_{\bZ_k^i \neq 0}$. With the definition of $\tilde \bz^{*,\eps}$, we have $\tilde \bq \in \tilde \partial \tilde h_{\gamma,c}(\tilde \bz^{*,\eps})$ such that $\tilde \bq_k = 0$ for $k \in \cA$ and $\tilde \bq_k = \bq_k$ otherwise.
Moreover, $\bq_k = 0$ for all $k \in \cA$.

 Therefore,  let $\tilde \bp \triangleq \nabla f(\tilde \bz^{*,\eps}) + \tilde \bq \in \partial F(\tilde \bz^{*,\eps})$, we have
\bals
\ltwonorm{\tilde \bp} = \ltwonorm{\tilde \bp - \bp} =  \ltwonorm{\nabla f(\tilde \bz^*) - \nabla f(\bz^*)} \le L_f |\cC^{-1}|  \varepsilon.
\eals
The claim of this proposition follows with $\bu = \tilde \bp$.

\end{proof}

We repeat critical equations in the main paper and define more notations before stating the proof of Theorem 3.2.

\bals
\prox_{s h_{\gamma,c}}(\bu) \defeq \argmin\limits_{\bv \in \RR^{n},\bv_i = 0} {\frac{1}{2s}\ltwonorm{\bv - \bu}^2 + h_{\gamma,c}(\bz)} =
T_{s,\gamma,c}(\bu),
\eals%
where $s>0$ is the step size, $T_{s,\gamma,c}$ is an element-wise hard thresholding operator. For $1 \le t \le n$,
\bal\label{eq:T-thresholding}
        &[T_{s,\gamma,c}(\bu)]_t=
        \left\{
        \begin{array}
                {r@{\quad:\quad}l}
                0 & {\abth{\bu_t} \le \sqrt{2s \gamma c_{ti}} \,\, {\rm and} \,\, c_{ti} >0, \,\, {\rm or} \,\, t = i  } \\
                {\bu_t} & {\rm otherwise}
        \end{array}
        \right.
\eal%

\subsection{Proof of Theorem 3.2}

\begin{proof}[\textup{\bf Proof of Theorem 3.2}]
First of all, it can be verified that $\supp{{\bz_{\cC}}^{(k)}} \subseteq \supp{{\bz_{\cC}}^{(k-1)}}$ for all $k \ge 1$ when $s < \frac{2\tau}{G^2}$. Therefore, there exists a finite $k' \ge 1$ such that $\set{{\bz_{\cC}}^{(k)}}_{k \ge k'}$ have the same support $\cS$. We note that $\lambda$ can be also be slightly adjusted so that $\supp{\bv_{\cC}^{(k)}} = \cS$ for all $k \ge k_0$. Now we consider any $k > k'$ in the sequel, and let $\bz \in \RR^n$ be a vector such that $\supp{\bz_{\cC}} = \cS$.

Because $f$ have $L_f$-Lipschitz continuous gradient, we have
\bal\label{eq:convex-theorem-seg1}
f(\bz^{(k)}) &\le f(\bm^{(k)}) + \langle \nabla f(\bm^{(k)}), \bz^{(k)}-\bm^{(k)} \rangle + \frac{L_f}{2} \ltwonorm{\bz^{(k)}-\bm^{(k)}}^2.
\eal%

Also,
\bal\label{eq:convex-theorem-seg2}
&f(\bm^{(k)}) - (1-\alpha_k)f(\bz^{(k-1)}) - \alpha_k f(\bz) \nonumber \\
&= (1-\alpha_k) \pth{ f(\bm^{(k)}) - f(\bz^{(k-1)}) } + \alpha_k \pth{ f(\bm^{(k)}) - f(\bz) } \nonumber \\
&\stackrel{\circled{1}}{\le} (1-\alpha_k) \langle \nabla f(\bm^{(k)}), \bm^{(k)} - \bz^{(k-1)} \rangle + \alpha_k \langle \nabla f(\bm^{(k)}), \bm^{(k)} - \bz \rangle \nonumber \\
&\le \langle \nabla f(\bm^{(k)}), (1-\alpha_k) (\bm^{(k)} - \bz^{(k-1)}) + \alpha_k (\bm^{(k)} - \bz)  \rangle \nonumber \\
&= \langle \nabla f(\bm^{(k)}),  \bm^{(k)} - (1-\alpha_k) \bz^{(k-1)}  - \alpha_k \bz \rangle,
\eal
where $\circled{1}$ is due to the convexity of $f$.


We have $\tilde \bv^{(k)} = \bv^{(k-1)} - \lambda_k \nabla f(\bm^{(k)})$, and it follows that
\bal \label{eq:convex-theorem-seg3-1}
&\frac{1}{2 \lambda_k} \pth{ \ltwonorm{\bv^{(k-1)} - \bz}^2 -  \ltwonorm{\bv^{(k)} - \bz}^2 - \ltwonorm{\bv^{(k)} - \bv^{(k-1)}}^2 } \nonumber \\
&= \frac{1}{\lambda_k} \langle \bz-\bv^{(k)}, \bv^{(k)} - \bv^{(k-1)}\rangle \nonumber \\
&\stackrel{\circled{1}}{=} \frac{1}{\lambda_k}\langle \bz-\bv^{(k)}, \tilde \bv^{(k)} - \bv^{(k-1)} \rangle\nonumber \\
&= \langle \nabla f(\bm^{(k)}),  \bv^{(k)}-\bz \rangle,
\eal%
and $\circled{1}$ is due to the fact that $\supp{\bz_{\cC} - \bv_{\cC}^{(k)} } \subseteq \cS$ because $\supp{\bz_{\cC}} = \cS$, $\supp{\bv_{\cC}^{(k)}} \subseteq \cS$.

Because $\supp{\bv_{\cC}^{(k)}} \subseteq \supp{\bz_{\cC}}$, we have
\bal
h_{\gamma,c}(\bv^{(k)}) \le h_{\gamma,c}(\bz). \label{eq:convex-theorem-seg3-2}
\eal%

It follows by (\ref{eq:convex-theorem-seg3-1}) and (\ref{eq:convex-theorem-seg3-2})
that
\bal \label{eq:convex-theorem-seg3}
&\langle \nabla f(\bm^{(k)}), \bv^{(k)}-\bz \rangle + h_{\gamma,c}(\bv^{(k)}) \nonumber \\
&\le h_{\gamma,c}(\bz)
+ \frac{1}{2 \lambda_k} \pth{ \ltwonorm{\bv^{(k-1)} - \bz}^2 -  \ltwonorm{\bv^{(k)} - \bz}^2 - \ltwonorm{\bv^{(k)} - \bv^{(k-1)}}^2 }
\eal%

Similar to (\ref{eq:convex-theorem-seg3-1}),  we have
\bal \label{eq:convex-theorem-seg4-1}
\frac{1}{2 s} \pth{ \ltwonorm{\bm^{(k)} - \bz}^2 -  \ltwonorm{\bz^{(k)} - \bz}^2 - \ltwonorm{\bz^{(k)}- \bm^{(k)}}^2 }
&= \frac{1}{s} \langle \bz-\bz^{(k)}, \bz^{(k)}- \bm^{(k)}\rangle.
\eal%

For any $\bq \in \partial h_{\gamma,c}(\bz^{(k)})$, due to the fact that $\supp{\bz_{\cC}} = \supp{\bz_{\cC}^{(k)}}$,
\bal \label{eq:convex-theorem-seg4-2}
&\langle \bz-\bz^{(k)}, \bq \rangle + h_{\gamma,c}(\bz^{(k)}) = h_{\gamma,c}(\bz).
\eal%

By (\ref{eq:convex-theorem-seg4-1}) and (\ref{eq:convex-theorem-seg4-2}),
\bal \label{eq:convex-theorem-seg4-3}
&\langle \bz- \bz^{(k)}, \frac{1}{s} (\bz^{(k)}- \bm^{(k)}) + \bq \rangle + h_{\gamma,c}(\bz^{(k)}) \nonumber \\
&= h_{\gamma,c}(\bz) + \frac{1}{2 s} \pth{ \ltwonorm{\bm^{(k)} - \bz}^2 -  \ltwonorm{\bz^{(k)}- \bz}^2 - \ltwonorm{\bz^{(k)}- \bm^{(k)}}^2 }
\eal%

By the optimality condition of the proximal mapping in eq.(10) in Algorithm 1, we can choose $\bq \in \partial h_{\gamma,c}(\bz^{(k)})$ such that $\bz^{(k)}= \bm^{(k)} - s \pth{ \nabla f(\bm^{(k)}) + \bq }$. Plugging such $\bq$ in (\ref{eq:convex-theorem-seg4-3}), we have
\bal \label{eq:convex-theorem-seg4}
\langle \nabla f(\bm^{(k)}), \bz^{(k)}-\bz \rangle + h_{\gamma,c}(\bz^{(k)}) &= h_{\gamma,c}(\bz) + \frac{1}{2 s} \pth{ \ltwonorm{\bm^{(k)} - \bz}^2 -  \ltwonorm{\bz^{(k)}- \bz}^2 - \ltwonorm{\bz^{(k)}- \bm^{(k)}}^2 }
\eal%


Setting $\bz = (1-\alpha_k)\bz^{(k-1)} + \alpha_k \bv^{(k)}$
in (\ref{eq:convex-theorem-seg4}), we have
\bal \label{eq:convex-theorem-seg5}
&\langle \nabla f(\bm^{(k)}), \bz^{(k)}- (1-\alpha_k)\bz^{(k-1)} - \alpha_k \bv^{(k)} \rangle + h_{\gamma,c}(\bz^{(k)}) \nonumber \\
&\le h_{\gamma,c}((1-\alpha_k)\bz^{(k-1)} + \alpha_k \bv^{(k)}) + \frac{1}{2 s} \pth{ \ltwonorm{\bm^{(k)} - (1-\alpha_k)\bz^{(k-1)}- \alpha_k \bv^{(k)}}^2 - \ltwonorm{\bz^{(k)}- \bm^{(k)}}^2 } \nonumber \\
&\stackrel{\circled{1}}{\le} (1-\alpha_k) h_{\gamma,c} (\bz^{(k-1)}) + \alpha_k h_{\gamma,c} (\bv^{(k)}) \nonumber \\
&\phantom{=}+ \frac{1}{2 s} \pth{ \ltwonorm{\bm^{(k)} - (1-\alpha_k)\bz^{(k-1)}- \alpha_k \bv^{(k)}}^2  - \ltwonorm{\bz^{(k)}- \bm^{(k)}}^2 } \nonumber \\
&\stackrel{\circled{2}}{\le} (1-\alpha_k) h_{\gamma,c} (\bz^{(k-1)}) + \alpha_k h_{\gamma,c} (\bv^{(k)}) + \frac{1}{2 s} \pth{ \alpha_k^2 \ltwonorm{\bv^{(k)} - \bv^{(k-1)}}^2   - \ltwonorm{\bz^{(k)} - \bm^{(k)}}^2 },
\eal%
where $\circled{1}$ is due to the fact that $\supp{\bv_{\cC}^{(k)}} =
\supp{\bz_{\cC}^{(k-1)})}$ and $h_{\gamma,c}$ satisfies $h_{\gamma,c}\pth{ (1-\tau)\bu + \tau \bv } \le (1-\tau)h_{\gamma,c}(\bu) + \tau h_{\gamma,c}(\bv)$ for any two vectors $\bu$, $\bv$ with $\supp{\bu_{\cC}} = \supp{\bv_{\cC}}$ and any $\tau \in (0,1)$.
$\circled{2}$ is due to the fact that $\bm^{(k)} - (1-\alpha_k)\bz^{(k-1)}- \alpha_k \bv^{(k)} = \alpha_k (\bv^{(k-1)} - \bv^{(k)})$ according to eq.(9) in Algorithm 1. 

Computing $\alpha_k \times$ (\ref{eq:convex-theorem-seg3})
+  (\ref{eq:convex-theorem-seg5}), we have
\bal\label{eq:convex-theorem-seg6}
&\langle \nabla f(\bm^{(k)}), \bz^{(k)}- (1-\alpha_k)\bz^{(k-1)} - \alpha_k \bz \rangle + h_{\gamma,c}(\bz^{(k)}) \nonumber \\
&\le (1-\alpha_k) h_{\gamma,c} (\bz^{(k-1)}) + \alpha_k h_{\gamma,c}(\bz) \nonumber \\
&\phantom{=} + \frac{\alpha_k}{2 \lambda_k} \pth{ \ltwonorm{\bv^{(k-1)} - \bz}^2 -  \ltwonorm{\bv^{(k)} - \bz}^2 } + \pth{ \frac{\alpha_k^2}{2 s} - \frac{\alpha_k}{2 \lambda_k}} \ltwonorm{\bv^{(k)} - \bv^{(k-1)}}^2
-\frac{1}{2 s}  \ltwonorm{\bz^{(k)}- \bm^{(k)}}^2 \nonumber \\
&\stackrel{\circled{1}}{\le} (1-\alpha_k) h_{\gamma,c} (\bz^{(k-1)}) + \alpha_k h_{\gamma,c}(\bz) + \frac{\alpha_k}{2 \lambda_k} \pth{ \ltwonorm{\bv^{(k-1)} - \bz}^2 -  \ltwonorm{\bv^{(k)} - \bz}^2 } -\frac{1}{2 s}  \ltwonorm{\bz^{(k)}- \bm^{(k)}}^2,
\eal%
and $\circled{1}$ is due to $\lambda_k \alpha_k \le s$.

Combining (\ref{eq:convex-theorem-seg1}), (\ref{eq:convex-theorem-seg2}) and (\ref{eq:convex-theorem-seg6}), and noting that ${\tilde F}(\bz) = f(\bz) + h_{\gamma,c} (\bz)$, we have
\bal \label{eq:convex-theorem-seg7}
{\tilde F}(\bz^{(k)}) &\le (1-\alpha_k){\tilde F}(\bz^{(k-1)}) + \alpha_k {\tilde F}(\bz) - \pth{ \frac{1}{2 s} - \frac{L_f}{2}} \ltwonorm{\bz^{(k)}- \bm^{(k)}}^2 \nonumber \\
&\phantom{=}+ \frac{\alpha_k}{2 \lambda_k} \pth{ \ltwonorm{\bv^{(k-1)} - \bz}^2 -  \ltwonorm{\bv^{(k)} - \bz}^2 }.
\eal%

It follows by (\ref{eq:convex-theorem-seg7}) that
\bal \label{eq:convex-theorem-seg8}
{\tilde F}(\bz^{(k)}) - {\tilde F}(\bz) &\le (1-\alpha_k)\pth{ {\tilde F}(\bz^{(k-1)}) -{\tilde F}(\bz) } \nonumber \\
&\phantom{=}- \pth{ \frac{1}{2 s} - \frac{L_f}{2}} \ltwonorm{\bz^{(k)}- \bm^{(k)}}^2 + \frac{\alpha_k}{2 \lambda_k} \pth{ \ltwonorm{\bv^{(k-1)} - \bz}^2 -  \ltwonorm{\bv^{(k)} - \bz}^2 }.
\eal%

Define a sequence $\{T_k\}_{k=1}^{\infty}$ as $T_1 = 1$, and $T_k = (1 - \alpha_k) T_{k-1}$ for $k \ge 2$. Dividing both sides of (\ref{eq:convex-theorem-seg8}) by $T_k$, we have
\bal \label{eq:convex-theorem-seg9}
\frac{{\tilde F}(\bz^{(k)}) - {\tilde F}(\bz)}{T_k} &\le \frac{ {\tilde F}(\bz^{(k-1)}) -{\tilde F}(\bz) }{T_{k-1}} - \frac{1-L_f s}{2 s T_k} \ltwonorm{\bz^{(k)} - \bm^{(k)}}^2 \nonumber \\
&\phantom{=}+ \frac{\alpha_k}{2 \lambda_k T_k} \pth{ \ltwonorm{\bv^{(k-1)} - \bz}^2 -  \ltwonorm{\bv^{(k)} - \bz}^2 }.
\eal%

Since we choose $\alpha_k = \frac{2}{k+1}$, it follows that $T_k = \frac{2}{k(k+1)}$ for all $k \ge 1$. Plugging the values of $\alpha_k$ and $T_k$ in $\frac{\alpha_k}{2 \lambda_k T_k}$ in (\ref{eq:convex-theorem-seg9}), we have
\bal \label{eq:convex-theorem-seg10}
\frac{{\tilde F}(\bz^{(k)}) - {\tilde F}(\bz)}{T_k} &\le \frac{ {\tilde F}(\bz^{(k-1)}) -{\tilde F}(\bz) }{T_{k-1}} - \frac{1-L_f s}{2 s T_k} \ltwonorm{\bz^{(k)}- \bm^{(k)}}^2 \nonumber \\
&\phantom{=}+ \frac{k}{2 \lambda_k} \pth{ \ltwonorm{\bv^{(k-1)} - \bz}^2 -  \ltwonorm{\bv^{(k)} - \bz}^2 } \nonumber \\
&\stackrel{\circled{1}}{\le} \frac{ {\tilde F}(\bz^{(k-1)}) -{\tilde F}(\bz) }{T_{k-1}} - \frac{1-L_f s}{2 s T_k} \ltwonorm{\bz^{(k)}- \bm^{(k)}}^2 + \frac{k}{2 \lambda_k} \ltwonorm{\bv^{(k-1)} - \bz}^2 \nonumber \\
&\phantom{=}-  \frac{k+1}{2 \lambda_{k+1}} \ltwonorm{\bv^{(k)} - \bz}^2,
\eal%
where $\circled{1}$ is due to the condition that $\lambda_{k+1} \ge \frac{k+1}{k} \lambda_k$ for $k \ge 1$.

Set $k_0 = k'+1$. Summing the above inequality for $k=k_0,\ldots,m$ with $m \ge k_0$, we have
\bal \label{eq:convex-theorem-seg11}
\frac{{\tilde F}(\bz^{(m)}) - {\tilde F}(\bz)}{T_m} &\le \frac{ {\tilde F}(\bz^{(k_0-1)}) -{\tilde F}(\bz) }{T_{k_0-1}} + \frac{k_0\ltwonorm{\bv^{(k_0-1)} - \bz}^2}{2\lambda_{k_0}} - \sum\limits_{k=k_0}^{m} \frac{1-L_f s}{2 s T_k} \ltwonorm{\bz^{(k)} - \bm^{(k)}}^2 \nonumber \\
&\le \frac{k_0(k_0-1)\pth{{\tilde F}(\bz^{(k_0-1)}) -{\tilde F}(\bz)}}{2} + \frac{\ltwonorm{\bv^{(k_0-1)} - \bz}^2}
{2\eta} .
\eal%

Since $T_m= \frac{2}{m(m+1)}$, it follows by (\ref{eq:convex-theorem-seg11}) with $z = z^*$ that
\bal \label{eq:convex-theorem-seg12}
&{\tilde F}(\bz^{(m)}) - {\tilde F}(\bz^*) \le \frac{1}{m(m+1)} \cdot
\pth{k_0(k_0-1)\pth{{\tilde F}(\bz^{(k_0-1)}) -{\tilde F}(\bz^*)} + \frac{\ltwonorm{\bv^{(k_0-1)} - \bz^*}^2}{\eta}}.
\eal%

Changing $m$ to $k$ in (\ref{eq:convex-theorem-seg12}) completes the proof.

\end{proof}

\begin{figure*}[!hbt]
\begin{center}
\includegraphics[width=1\textwidth]{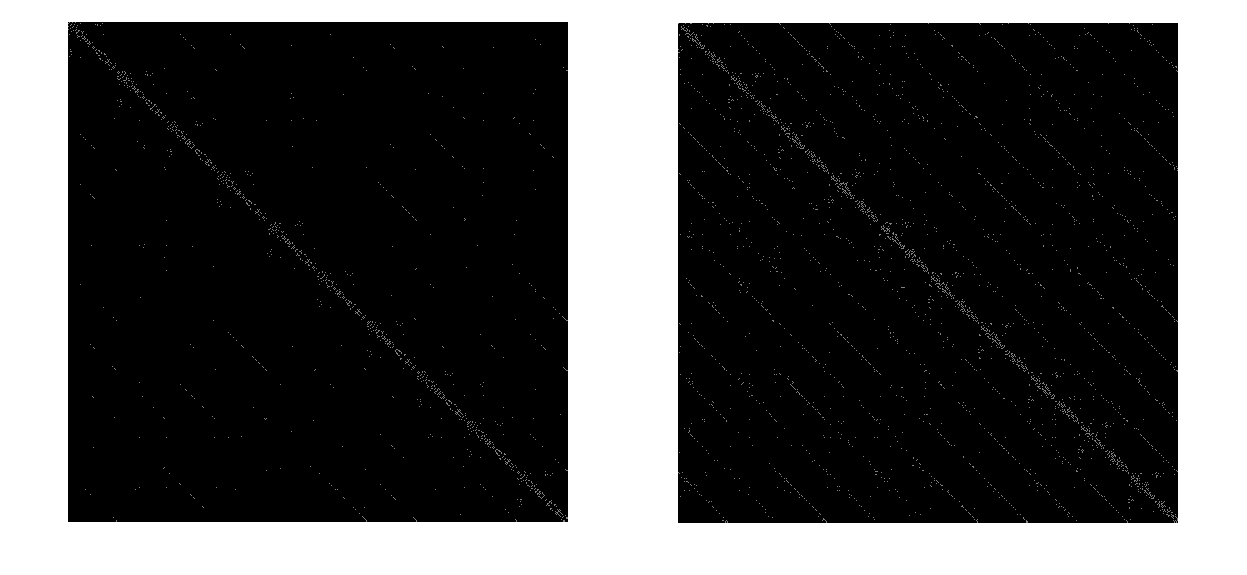}
\end{center}
   \caption{The comparison between the weighed adjacency matrix $W$ of the sparse graph produced by $\ell^{1}$-graph (right) and SRSG (left) on the Extended Yale Face Database B, where each white dot indicates an edge in the sparse graph.
   }
\label{fig:yaleb-W}
\end{figure*}
\section{Additional Illustration}

Figure~\ref{fig:yaleb-W} illustrates the comparison between the weighed adjacency matrix of $\ell^{1}$-graph and SRSG.

\bibliography{ref}
\end{document}